\documentclass{article} 
\usepackage{iclr2023_conference_tinypaper,times}

\usepackage{amsmath,amsfonts,bm}









\def\eqref#1{equation~\ref{#1}}









\def\1{\bm{1}}








\def\vb{{\bm{b}}}

\def\vd{{\bm{d}}}

\def\vh{{\bm{h}}}

\def\vs{{\bm{s}}}

\def\vw{{\bm{w}}}
\def\vx{{\bm{x}}}
\def\vy{{\bm{y}}}


\def\mA{{\bm{A}}}

\def\mD{{\bm{D}}}
\def\mE{{\bm{E}}}

\def\mJ{{\bm{J}}}

\def\mR{{\bm{R}}}

\def\mU{{\bm{U}}}

\def\mW{{\bm{W}}}

\DeclareMathAlphabet{\mathsfit}{\encodingdefault}{\sfdefault}{m}{sl}
\SetMathAlphabet{\mathsfit}{bold}{\encodingdefault}{\sfdefault}{bx}{n}


\def\gW{{\mathcal{W}}}












\DeclareMathOperator*{\argmin}{arg\,min}

\usepackage{hyperref}
\usepackage{url}
\usepackage{bbold}
\usepackage{ragged2e}
\author{
Khoi Minh Nguyen-Duy\textsuperscript{1}, Quang Pham\textsuperscript{2}, Thanh Binh Nguyen\textsuperscript{1,3}\\
\textsuperscript{1}Department of Mathematics and Computer Science, VNUHCM - US, Vietnam\\
\textsuperscript{2}Institute for Infocomm Research (I$^2$R), A$^*$STAR, Singapore\\
\textsuperscript{3}AISIA Research Lab\\
Corresponding author: \texttt{ngtbinh@hcmus.edu.vn}
}

\iclrfinalcopy 
\usepackage{hyperref}
\usepackage{url}            
\usepackage{amsfonts}       
\usepackage{nicefrac}       
\usepackage{microtype}      
\usepackage{amsmath}
\usepackage{amsthm}
\usepackage{thmtools}
\usepackage{thm-restate}
\usepackage{multirow}
\usepackage{amssymb}
\usepackage{graphicx}
\usepackage{wrapfig}
\usepackage{soul}
\usepackage{import}
\usepackage{makecell}
\usepackage{placeins}
\usepackage{caption}
\usepackage{subfig}

\newtheorem{lemma}{Lemma}[section]

\newtheorem{assumption}{Assumption}[section]

\newcommand{\norm}[1]{||#1||}
\DeclareMathOperator{\Lagr}{\mathcal{L}}
\begin{document}
\vspace*{-0.1in}
\title{Adaptive-saturated RNN: \\ Remember more with less instability}
\vspace*{-0.1in}
\maketitle
\vspace*{-0.1in}
\begin{abstract}

Orthogonal parameterization is a compelling solution to the vanishing gradient problem (VGP) in recurrent neural networks (RNNs). With orthogonal parameters and non-saturated activation functions, gradients in such models are constrained to unit norms. On the other hand, although the traditional vanilla RNNs are seen to have higher memory capacity, they suffer from the VGP and perform badly in many applications. This work proposes Adaptive-Saturated RNNs (asRNN), a variant that dynamically adjusts its saturation level between the two mentioned approaches. Consequently, asRNN enjoys both the capacity of a vanilla RNN and the training stability of orthogonal RNNs. Our experiments show encouraging results of asRNN on challenging sequence learning benchmarks compared to several strong competitors. The research code is accessible at \href{https://github.com/ndminhkhoi46/asRNN/}{https://github.com/ndminhkhoi46/asRNN/}.
\end{abstract}
\vspace*{-0.1in}
\section{Motivation}
\vspace*{-0.1in}
\label{sec:introduction}
Training vanilla RNNs (with tanh activation) is notoriously challenging due to the VGP, where the gradients' magnitudes become \emph{exponentially} smaller~\citep{collins2017capacity}. Thus, extensive efforts have been devoted to develop more stable, effective models such as orthogonal RNNs~\citep{pmlr-v97-lezcano-casado19a}, and LSTM~\citep{10.1162/neco.1997.9.8.1735}. Empirically, orthogonal RNNs show more competitive performances compared to LSTM and vanilla RNN on long-sequence tasks. However, due to non-saturated activations and unitary constrain, their memory capacity is also more limited compared to vanilla RNN~\citep{collins2017capacity}. 

This study aims to realize the memory capacity potential of vanilla RNNs by endowing them with the capability to address the VGP of orthogonal RNNs. As a result, we propose the \emph{adaptive-saturated RNN} (asRNN), a vanilla RNN variant that dynamically adjusts the activation's saturation level. Particularly, we observe that, let $f(x;a) = \frac{\mathrm{tanh} (ax)}{a}$, then:
\begin{equation} \label{eqn:motivation}
\lim_{a \to 0} f(x;a)=x, \quad \mathrm{and} \quad \lim_{a \to 1} f(x;a) = \mathrm{tanh}(x).
\end{equation}
Thus, by generalizing and using $f(x;a)$ as an activation function, we can adjust $a$ and update the parameters freely of a vanilla RNN to achieve high memory capacity without being affected by the VGP. In the following, we will formally introduce asRNN and outline a key result of a condition for avoiding the VGP in asRNN. 


\if0
In \citet{Pascanu2012OnTD}, RNN is shown to have low trainability, displayed with unstable convergence, or converging to bad optimums when learning long-term dependencies. The same source associated this low trainability with the vanishing and exploding gradient problems (EVGP), the exponential changes in the scale of gradients induced by the re-application of hidden parameters. Observed in \citet{collins2017capacity}, Tanh RNN has lower trainability but also stores more task information per parameter than ReLU RNN, GRU, and LSTM.  In \cite{pmlr-v119-zhao20c}, the authors proved that over an infinite time scale and under scale-dependent loss (MSE in the original article; see Assumption~\ref{scaleindependent}), Tanh RNN does not possess long-term memory and might exhibit vanishing gradient problem.

In this article, by observing $ f(a,x)=a\text{tanh}(\frac{x}{a})$ is a $1$-Lipschitz function satisfying $\lim_{a\to 0}f(a,x)=x$, we notice that Linear RNN solutions can also be approximated by a Tanh RNN. Leveraging this observation, we prove the existence of conditions under which Tanh RNN does not vanish gradient. The conditions allow us to reparameterize and initialize Tanh RNN into a variant called adapt-saturated RNN (asRNN).

We compare asRNN with LSTM and unitary-based RNNs at long sequential classification to show that without vanishing gradient, our variant possesses high memory capacity and trainability. Furthermore, since singular values and eigenvalues of $\mJ(t)$ (see section~\ref{sec:method}) in asRNN is less restricted than unitary-based RNNs, we speculate our model to perform better on tasks requiring large hidden size or complex structural design such as language modeling. This speculation will be put to test.
\fi
\vspace*{-0.1in}
\section{Methodology}\label{sec:method}
\vspace*{-0.1in}
\textbf{Formulation } Based on the observation Eq.~\ref{eqn:motivation}, we formally define the hidden cell of asRNN as:
\begin{equation}
	h_t = \mW_f^{-1}\mathrm{tanh}( \mW_f( \mW_{xh}\vx_{t}+ \mW_{hh}\vh_{t-1} + \vb)),
\end{equation}
where $\gW=\{\mW_f, \mW_{xh}$, $\mW_{hh}, \vb\}$ is the set of trainable parameters, $\mW_f$ introduces an end-to-end composite layer to control the saturation level of asRNN. To ensure the non-singularity of $\mW_f$, we parameterize $\mW_f = \mU_f \mD_f$, where $\mU_f$ is orthogonal, $\mD_f$ is positively diagonal. Remarkably, we observe that: (i) by fixing  $\mW_f$ to be identity, we recover a vanilla RNN; and (ii) let $\mW_{hh}$ be orthogonal, fix $\mU_f$ to be the identity, and let $\mD_f \to \mathbb{0}$, we recover an orthogonal RNN. From such construction, asRNN not only dynamically adjusts the saturation level but also controls the singular values of the temporal Jacobian (Thm.~\ref{maintheoremlabel}), which in turn alleviates the VGP.

\textbf{Key theoretical result }
Let $\vw \in \gW$, we define the gradient and Jacobian as~\citep{Pascanu2012OnTD}: $\frac{\partial \Lagr}{\partial \vw} = \sum_{1 \leq t_1 \leq T}\frac{\partial \Lagr}{\partial \vh_T}\mJ( T,t_1) \frac{\partial \vh_{t_1}}{\partial \vw}$, where
$\mJ( t_2,t_1)  =  \prod_{t_1 < t \leq t_2} \mJ( t) $ and $\mJ(t) = \mW_f^{-1}\text{diag}[1-( \mW_f\vh_t) ^2]\mW_f\mW_{hh}$.
The vanishing gradient problem in RNNs is credited to the existence of a temporal Jacobian $\mJ(t_2,t_1) \approx 0$ that bottlenecks the backpropagated signal. 
Under mild assumptions (Appendix~\ref{appendix:assumption}), we show a condition for asRNN where all singular values of temporal Jacobian matrices are lower bounded by $1$. Thanks to this, the VGP is alleviated on vanilla RNN.
\begin{restatable}{theorem}{maintheoremrestate}\label{maintheoremlabel}
	Let $G$ and $H$ be respectively the $d_h$-th degree generalized permutation group and its signed permutation subgroup. Under the assumptions in Appendix~\ref{appendix:assumption}, if $\norm{\mD_{f}}_2\leq \frac{\text{arctanh}( \sqrt{1-\norm{\mW_{hh}^{-1}}_2}) }{(\norm{\mW_{xh}}_2C_x + \norm{\vb}_\infty) \sum_{i=0}^{t-1}( \norm{\mW_{hh}}_{\max}+1)^i}$ and
	$\min_{\mE\in G}\norm{\mW_{hh} - \mE}_2 \leq \frac{\sigma_{\min}( \mD_{f})}{\norm{\mD_{f}}_2}$, then
	$$\forall t_2>t_1\in\mathbb{N}^*, \exists \epsilon\geq0: \min_{\mE\in H}\norm{\mU_{f} - \mE}_2 \leq \epsilon  \rightarrow  \sigma_{\min}(\mJ(t_2,t_1))\geq 1.$$
\end{restatable}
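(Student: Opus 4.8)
The plan is to reduce the horizon-$(t_2,t_1)$ statement to a uniform per-step bound and then prove that single-step bound. First I would invoke submultiplicativity of the smallest singular value, $\sigma_{\min}(\mA\mB)\ge\sigma_{\min}(\mA)\,\sigma_{\min}(\mB)$, which follows from $\norm{\mA\mB\vv}\ge\sigma_{\min}(\mA)\norm{\mB\vv}\ge\sigma_{\min}(\mA)\sigma_{\min}(\mB)\norm{\vv}$. This gives $\sigma_{\min}(\mJ(t_2,t_1))=\sigma_{\min}\big(\prod_{t_1<t\le t_2}\mJ(t)\big)\ge\prod_{t_1<t\le t_2}\sigma_{\min}(\mJ(t))$, so it suffices to show $\sigma_{\min}(\mJ(t))\ge1$ for every single step $t$, and the quantifier over $t_2>t_1$ disappears.

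The second step controls the saturation through the diagonal matrix $\mathbf{D}_t:=\mathrm{diag}[1-(\mW_f\vh_t)^2]$. Writing $z_t:=\mW_f(\mW_{xh}\vx_t+\mW_{hh}\vh_{t-1}+\vb)$ so that $\mW_f\vh_t=\tanh(z_t)$, I would unroll the hidden-state recursion and combine the boundedness assumptions ($\norm{\vx_t}\le C_x$, etc.) with $\norm{\mW_f}_2=\norm{\mD_f}_2$ (since $\mU_f$ is orthogonal) to bound $\norm{z_t}_\infty$. The factor $\sum_{i=0}^{t-1}(\norm{\mW_{hh}}_{\max}+1)^i$ in the first hypothesis is exactly the geometric accumulation generated by unrolling $t$ steps, so that hypothesis forces $\norm{z_t}_\infty\le\mathrm{arctanh}\!\big(\sqrt{1-\norm{\mW_{hh}^{-1}}_2}\big)$. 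As $1-\tanh^2$ is decreasing in $|z|$, this yields the entrywise bound $(\mathbf{D}_t)_{ii}\ge\norm{\mW_{hh}^{-1}}_2=1/\sigma_{\min}(\mW_{hh})$. Note this silently needs $\sigma_{\min}(\mW_{hh})\ge1$ for the $\mathrm{arctanh}$ to be real, which I would record as an appendix assumption.

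The third step handles the idealized configuration $\mU_f\in H$. If $\mU_f=\mP\mathbf{S}$ is a signed permutation, then $\mU_f^{\top}\mathbf{D}_t\mU_f$ only permutes the diagonal entries of $\mathbf{D}_t$ (the signs square away), so it stays diagonal; since $\mD_f$ is also diagonal it commutes and cancels in $\mD_f^{-1}(\mU_f^{\top}\mathbf{D}_t\mU_f)\mD_f$. Hence $\mJ(t)=(\mU_f^{\top}\mathbf{D}_t\mU_f)\mW_{hh}$, and orthogonal conjugation preserves singular values, so $\sigma_{\min}(\mJ(t))\ge\sigma_{\min}(\mathbf{D}_t)\,\sigma_{\min}(\mW_{hh})\ge\norm{\mW_{hh}^{-1}}_2\,\sigma_{\min}(\mW_{hh})=1$ by the Step-2 bound. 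The second hypothesis $\min_{\mE\in G}\norm{\mW_{hh}-\mE}_2\le\sigma_{\min}(\mD_f)/\norm{\mD_f}_2$ I expect to enter only once $\mU_f$ is merely near $H$: the reciprocal condition number $\sigma_{\min}(\mD_f)/\norm{\mD_f}_2$ is precisely the factor governing how much the similarity $\mD_f^{-1}(\cdot)\mD_f$ can amplify the off-diagonal part of $\mU_f^{\top}\mathbf{D}_t\mU_f$ that a perturbed $\mU_f$ introduces, while closeness of $\mW_{hh}$ to the generalized permutation group keeps $\mJ(t)$ near a scaled monomial matrix with transparent singular values.

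The final step is the continuity argument producing $\epsilon$. Writing $\mU_f=\mE+\Delta$ with $\mE\in H$ nearest and $\norm{\Delta}_2\le\epsilon$, I would show the off-diagonal part $\mathbf{E}_t$ of $\mU_f^{\top}\mathbf{D}_t\mU_f$ is $O(\epsilon)$, so that $\mJ(t)$ is Lipschitz in $\mU_f$ and $\sigma_{\min}(\mJ(t))\ge1$ survives on a neighborhood of the ideal point, giving a valid (possibly small) $\epsilon\ge0$. The main obstacle lives here: at $\mU_f\in H$ the inequality $\sigma_{\min}(\mJ(t))\ge1$ is \emph{tight}, so a crude perturbation estimate could push it below $1$. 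Making it go through requires showing the two hypotheses jointly supply enough slack — either a strict margin in Step 3 or a first-order cancellation of the off-diagonal term — so that the admissible $\epsilon$ is genuinely nonnegative rather than forced to $0$. Quantifying this interplay between the $\mD_f$-conditioning tolerance and the distance of $\mW_{hh}$ to $G$ is the delicate part I would spend the most care on.
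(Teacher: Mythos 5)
Your plan coincides with the paper's own proof: reduce to a single-step bound via $\sigma_{\min}(\mA\mB)\geq\sigma_{\min}(\mA)\sigma_{\min}(\mB)$, bound the saturation matrix $\mD_t=\text{diag}[1-(\mW_f\vh_t)^2]$ by unrolling the recursion so that $\sigma_{\min}(\mD_t)\geq 1/\sigma_{\min}(\mW_{hh})$, exploit the exact cancellation $\mD_f^{-1}\mE_f^\intercal\mD_t\mE_f\mD_f=\mE_f^\intercal\mD_t\mE_f$ when $\mU_f=\mE_f$ is a signed permutation, and treat general $\mU_f$ by perturbation; your reading of why $\sigma_{\min}(\mW_{hh})\geq 1$ must be assumed is also correct. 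There is, however, one concrete gap: you place the hypothesis $\min_{\mE\in G}\norm{\mW_{hh}-\mE}_2\leq\sigma_{\min}(\mD_f)/\norm{\mD_f}_2$ in the final perturbation step, whereas it is needed already in your Step 2. Unrolling the recursion produces powers of $\norm{\mD_f\mW_{hh}\mD_f^{-1}}_2$, not of $\norm{\mW_{hh}}_{\max}+1$, and diagonal conjugation can amplify a general matrix by the condition number of $\mD_f$, which is unbounded a priori; so the geometric sum appearing in the first hypothesis would not close. The paper splits $\mW_{hh}=\mE_{hh}+\mR_{hh}$ with $\mE_{hh}$ the nearest generalized permutation: the monomial part survives the conjugation with norm controlled by $\norm{\mW_{hh}}_{\max}$, while the residual contributes at most $\norm{\mR_{hh}}_2\norm{\mD_f}_2/\sigma_{\min}(\mD_f)\leq 1$ precisely by the second hypothesis. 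Without this decomposition your saturation bound does not follow from the stated hypotheses.

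On your final worry about the admissible $\epsilon$: the statement only asserts $\exists\,\epsilon\geq 0$, and the paper's proof in fact delivers only $\epsilon(t)=0$ in the tight case --- its last inequality has the form $\sigma_{\min}(\mJ(t))\geq\left(1/\sigma_{\min}(\mW_{hh})-c\,\norm{\mR_f}_2\right)\sigma_{\min}(\mW_{hh})$, which is $\geq 1$ only when $\norm{\mR_f}_2=0$ unless the saturation bound holds with strict slack. So the first-order cancellation you propose to hunt for is not required for the literal claim (though it would be needed to extract a quantitatively useful $\epsilon>0$); you could simply take $\epsilon=0$ and be done.
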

\vspace*{-0.1in}
Importantly, while \citet{pmlr-v119-zhao20c} questioned and showed a scenario where vanilla RNN and LSTM have a short memory, a problem associated with the VGP. In contrast, our Thm.~\ref{maintheoremlabel} suggests the existence of another in which vanilla RNN resists VGP and possesses long memory.


\if0
Across this article, $d_h, T \in \mathbb{N}$ denote respectively the hidden size and the sequence length. The symbols $\Lagr$ and $\theta$ stand respectively for a scale-independent loss (Assumption~\ref{scaleindependent}) and a scalar parameter of asRNN. For simple notation, the proof assumes many-to-one tasks. However, the result also generalizes to other tasks.

Generalized from the observation in section~\ref{sec:introduction}, we reconstructs Tanh RNN as following: $ \mW_f^{-1}\text{tanh}( \mW_f( \mW_{xh}\vx_{t}+ \mW_{hh}\vh_{t-1} + \vb) ) $, where $\mU_f$ is orthogonal, $\mD_f$ is positive diagonal and $\mW_f=\mU_f\mD_f$. Similar to \citet{Pascanu2012OnTD}, $\frac{\partial \Lagr}{\partial \theta} = \sum_{1 \leq t_1 \leq T}\frac{\partial \Lagr}{\partial \vh_T}\mJ( T,t_1) \frac{\partial \vh_{t_1}}{\partial \theta}$, where
$\mJ( t_2,t_1)  =  \prod_{t_1 < t \leq t_2} \mJ( t) $ and $\mJ(t) = \mW_f^{-1}\text{diag}[1-( \mW_f\vh_t) ^2]\mW_f\mW_{hh}$.

The vanishing gradient problem in RNNs can be credited to the existence of a temporal Jacobian $\mJ(t_2,t_1) \approx 0$ that bottlenecks the back-propagated signal. The next theorem states that under assumptions in Appendix~\ref{assumptionappendix}, there is a configuration of asRNN such that every singular value of the temporal Jacobian matrix is lower bounded by $1$. In other words, there is no vanishing gradient.
\begin{restatable}{theorem}{maintheoremrestate}\label{maintheoremlabel}
  Let $G$ and $H$ be respectively the $d_h$-th degree generalized permutation group and its signed permutation subgroup. Under the assumptions in Appendix~\ref{assumptionappendix}, if $\norm{\mD_{f}}_2\leq \frac{\text{arctanh}( \sqrt{1-\norm{\mW_{hh}^{-1}}_2}) }{(\norm{\mW_{xh}}_2C_x + \norm{\vb}_\infty) \sum_{i=0}^{t-1}( \norm{\mW_{hh}}_{\max}+1)^i}$ and
  $\min_{\mE\in G}\norm{\mW_{hh} - \mE}_2 \leq \frac{\sigma_{\min}( \mD_{f})}{\norm{\mD_{f}}_2}$, then
  $$\exists \epsilon\geq0: \min_{\mE\in H}\norm{\mU_{f} - \mE}_2 \leq \epsilon  \rightarrow \forall t_2>t_1\in\mathbb{N}^*: \sigma_{\min}(\mJ(t_2,t_1))\geq 1.$$
\end{restatable}
Details on how the configuration integrates into asRNN is detailed in Appendix~\ref{subsec:expsetup}.
\fi

\vspace*{-0.1in}
\section{Experiment}
\vspace*{-0.1in}
\begin{figure}[t]
	\subfloat[Copy task ($L=2000$) \label{copy2000fig}]	{\includegraphics[width=5.4cm]{./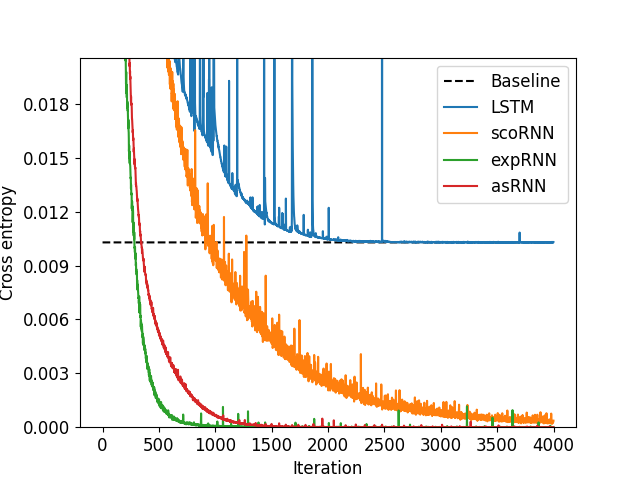}}
	\subfloat[Sequential MNIST\label{smnistfig}]{\includegraphics[width=4.293cm]{./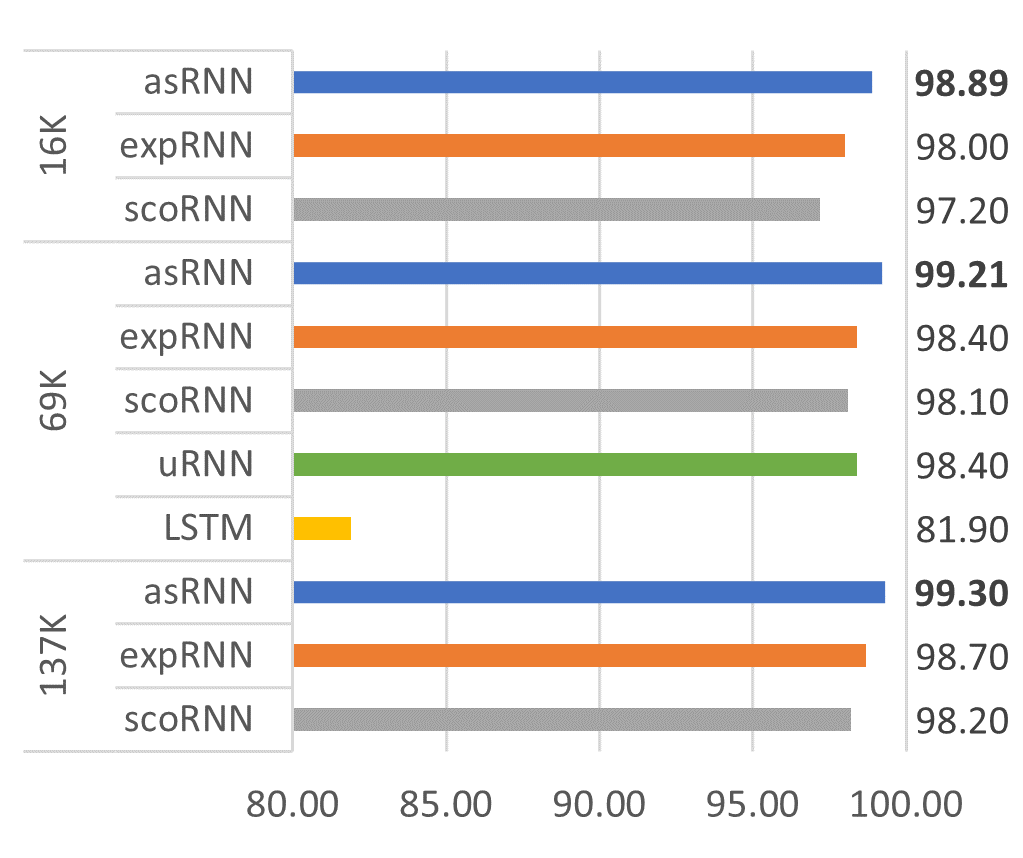}}
	\subfloat[Permuted MNIST\label{pmnistfig}]{\includegraphics[width=4.293cm]{./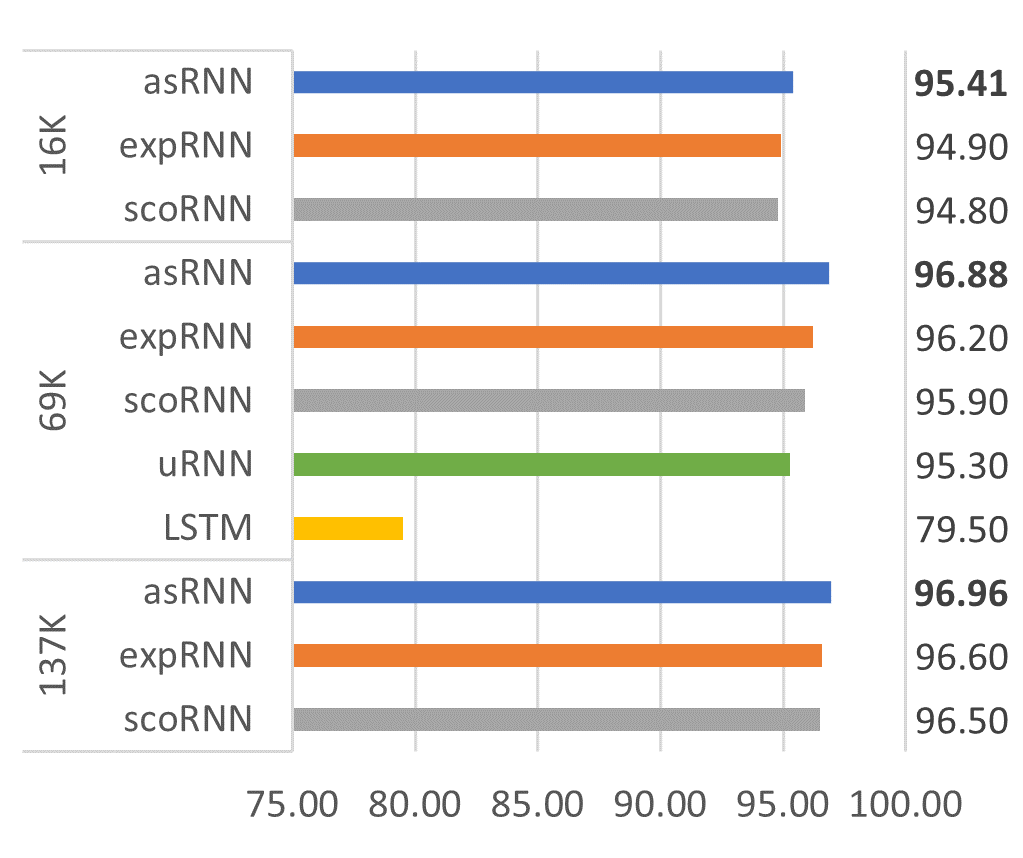}}
	\caption{Training loss for the Copy task and test accuracies for the sequential and permuted MNIST.\label{fig}}
	\vspace*{-0.2in}
\end{figure}
\begin{wraptable}{r}{0.42\textwidth}
	\small
	\begin{center}
		\setlength\tabcolsep{3pt}
		\begin{tabular}{|l|c|c|}
			\hline
			Model & $T=150$ & $T=300$  \\ \hline
			LSTM & $\bm{1.41 \pm 0.005}$ & $\bm{1.43 \pm 0.004}$  \\ \hline
			asRNN & $1.46 \pm 0.006$ & $1.49 \pm 0.005$  \\ \hline
			nnRNN & $1.47 \pm 0.003$ & $1.49 \pm 0.002$  \\ \hline
			expRNN & $1.49 \pm 0.008$ & $1.52 \pm 0.001$  \\ \hline
			EURNN & $1.61 \pm 0.001$ & $1.62 \pm 0.001$ \\ \hline
			RNN-orth & $1.62 \pm 0.004$ & $1.66 \pm 0.006$  \\ \hline
			RNN & $2.89 \pm 0.002$ & $2.90 \pm 0.002$  \\ \hline
		\end{tabular}
		\captionof{table}{Test BPC on PTB-c at different BPTT lengths (T). Best results are in bold.}\label{ptbctab}
		\vspace*{-0.2in}
	\end{center}
\end{wraptable}

To validate the model's memory capacity, we consider the Copy task\citep{10.1162/neco.1997.9.8.1735}, sequential MNIST\citep{lecun1998gradient}, and permuted MNIST\citep{GoodfellowMDCB13}. Next, we use the Penn Treebank character-level prediction (PTB-c)\citep{marcus-etal-1993-building} task to explore the model's expressivity \citep{10.5555/3454287.3455506, Bojanowski2015AlternativeSF}. We benchmark asRNN against the strong orthogonal RNNs with long memories and high trainability such as expRNN\citep{pmlr-v97-lezcano-casado19a}, scoRNN\citep{pmlr-v80-helfrich18a}, uRNN\citep{pmlr-v48-arjovsky16}, and the popular LSTM\citep{10.1162/neco.1997.9.8.1735} with high expressivity from the gated mechanism. We follow the setting described in Appendix~\ref{experiment} and report the results in Fig.~\ref{fig} and Tab.~\ref{ptbctab}. On long sequence learning tasks (Fig.~\ref{fig}), our asRNN shows excellent performance by converging stably on the Copy task, and achieves better generalization on the sequential and permuted MNIST tasks. On the PTB task, asRNN achieved encouraging results by outperforming all orthogonal RNN baselines, only second to LSTM. Overall, our empirical results show that asRNN possesses both high memory capacity and expressivity compared to other non-gated RNNs and can alleviate the EVGP, which corroborates our motivation in Sec.~\ref{sec:introduction}.

\vspace*{-0.05in}
\section{Conclusion}
\vspace*{-0.05in}
We have investigated the potential and limitations of vanilla RNNs in learning long-sequence tasks. Then, we proposed asRNN, a novel vanilla RNN variant that enjoys strong resistance to the VGP and can possess long memory. Our experiment results show asRNN enjoys encouraging performances on tasks that demand memory span, memory capacity, or expressivity.
\if0
We compare memory capacity of each model using copy memory task (recall length of $10$, delay length of $2000$) and pixelated MNIST tasks. We also use Penn Tree Bank character-level prediction (PTB-c) task, which do not bottleneck memory capacity, to compare the models \citet{collins2017capacity}. A key factor deciding good performance on such task is the expressiveness, including structural design and the hidden size \citep{Bojanowski2015AlternativeSF}. We benchmark asRNN against several unitary-based RNNs, which has high memory capacity \citep{10.5555/3045390.3045605}, and with LSTM, which has high expressiveness. Appendix~\ref{experiment} provides details of the data sets and the experimental setup.

Figure~\ref{copy2000fig}, Figure~\ref{smnistfig}, and Figure~\ref{pmnistfig} show that asRNN outperforms unitary-based RNNs and LSTM at both memorization and trainability. Good convergence on such tasks indicates asRNN successes at learning long-term dependencies without vanishing gradient. In Table~\ref{ptbctab}, asRNN performs only second to the gated mechanism LSTM. The result shows that asRNN possesses high memory capacity, trainability, and expressiveness on hidden state.
\begin{figure}[thbp]
	\subfloat[Train losses at copy memory (L=2000).\label{copy2000fig}]	{\includegraphics[width=4.94cm]{./content/copy_2000.png}}
	\subfloat[Best test accuracy at sMNIST.\label{smnistfig}]{\includegraphics[width=4.4cm]{./content/sMNIST.png}}
	\subfloat[Best test accuracy at pMNIST.\label{pmnistfig}]{\includegraphics[width=4.4cm]{./content/pMNIST.png}}
	\caption{Memorization benchmarks.}
\end{figure}
\begin{wrapfigure}[10]{r}{0.5\textwidth}
	\begin{center}
		\begin{tabular}{|l|l|l|}
			\hline
			Model & T=150 & T=300  \\ \hline
			LSTM & {\bf 1.41 ± 0.005} & {\bf 1.43 ± 0.004}  \\ \hline
			asRNN & 1.46 ± 0.006 & 1.49 ± 0.005  \\ \hline
			nnRNN* & 1.47 ± 0.003 & 1.49 ± 0.002  \\ \hline
			expRNN* & 1.49 ± 0.008 & 1.52 ± 0.001  \\ \hline
			EURNN* & 1.61 ± 0.001 & 1.62 ± 0.001 \\ \hline
			RNN-orth* & 1.62 ± 0.004 & 1.66 ± 0.006  \\ \hline
			RNN* & 2.89 ± 0.002 & 2.90 ± 0.002  \\ \hline
		\end{tabular}
		\captionof{table}{Test bit-per-character at PTB-c.}\label{ptbctab}
	\end{center}
\end{wrapfigure}
\fi
\section*{Acknowledgement}
This research is supported by the National Research Foundation, Singapore under its AI Singapore Programme (AISG Award No: AISG2-RP-2021-027).
\section*{URM Statement}
Author Khoi Minh Nguyen-Duy meets the URM criteria of the ICLR 2023 Tiny Papers Track.
{\RaggedRight
\bibliography{iclr2023_conference_tinypaper}

\begin{thebibliography}{16}
\providecommand{\natexlab}[1]{#1}
\providecommand{\url}[1]{\texttt{#1}}
\expandafter\ifx\csname urlstyle\endcsname\relax
  \providecommand{\doi}[1]{doi: #1}\else
  \providecommand{\doi}{doi: \begingroup \urlstyle{rm}\Url}\fi

\bibitem[Arjovsky et~al.(2016)Arjovsky, Shah, and Bengio]{pmlr-v48-arjovsky16}
Martin Arjovsky, Amar Shah, and Yoshua Bengio.
\newblock Unitary evolution recurrent neural networks.
\newblock In Maria~Florina Balcan and Kilian~Q. Weinberger (eds.),
  \emph{Proceedings of The 33rd International Conference on Machine Learning},
  volume~48 of \emph{Proceedings of Machine Learning Research}, pp.\
  1120--1128, New York, New York, USA, 20--22 Jun 2016. PMLR.
\newblock URL \url{https://proceedings.mlr.press/v48/arjovsky16.html}.

\bibitem[Bojanowski et~al.(2016)Bojanowski, Joulin, and
  Mikolov]{Bojanowski2015AlternativeSF}
Piotr Bojanowski, Armand Joulin, and Tomas Mikolov.
\newblock Alternative structures for character-level rnns.
\newblock \emph{4th {International Conference on Learning Representations}},
  Workshop track, 2016.

\bibitem[Collins et~al.(2017)Collins, Sohl-Dickstein, and
  Sussillo]{collins2017capacity}
Jasmine Collins, Jascha Sohl-Dickstein, and David Sussillo.
\newblock Capacity and trainability in recurrent neural networks.
\newblock In \emph{International Conference on Learning Representations}, 2017.
\newblock URL \url{https://openreview.net/forum?id=BydARw9ex}.

\bibitem[David Kewei~Lin(2019)]{Lin2019uRNNA}
Jensen Jinhui~Wang David Kewei~Lin.
\newblock {uRNN}: {An} {Approach} to {Bounded} {Gradients}.
\newblock \emph{Stanford}, CS224n: Natural Language Processing with Deep
  Learning, 2019.
\newblock URL
  \url{https://web.stanford.edu/class/archive/cs/cs224n/cs224n.1194/reports/custom/15842215.pdf}.

\bibitem[Goodfellow et~al.(2014)Goodfellow, Mirza, Da, Courville, and
  Bengio]{GoodfellowMDCB13}
Ian~J. Goodfellow, Mehdi Mirza, Xia Da, Aaron~C. Courville, and Yoshua Bengio.
\newblock An empirical investigation of catastrophic forgeting in
  gradient-based neural networks.
\newblock In Yoshua Bengio and Yann LeCun (eds.), \emph{2nd International
  Conference on Learning Representations, {ICLR} 2014, Banff, AB, Canada, April
  14-16, 2014, Conference Track Proceedings}, 2014.
\newblock URL \url{http://arxiv.org/abs/1312.6211}.

\bibitem[Helfrich et~al.(2018)Helfrich, Willmott, and Ye]{pmlr-v80-helfrich18a}
Kyle Helfrich, Devin Willmott, and Qiang Ye.
\newblock Orthogonal recurrent neural networks with scaled {C}ayley transform.
\newblock In Jennifer Dy and Andreas Krause (eds.), \emph{Proceedings of the
  35th International Conference on Machine Learning}, volume~80 of
  \emph{Proceedings of Machine Learning Research}, pp.\  1969--1978. PMLR,
  10--15 Jul 2018.
\newblock URL \url{https://proceedings.mlr.press/v80/helfrich18a.html}.

\bibitem[Henaff et~al.(2016)Henaff, Szlam, and LeCun]{10.5555/3045390.3045605}
Mikael Henaff, Arthur Szlam, and Yann LeCun.
\newblock Recurrent orthogonal networks and long-memory tasks.
\newblock In \emph{Proceedings of the 33rd International Conference on
  International Conference on Machine Learning - Volume 48}, ICML'16, pp.\
  2034–2042. JMLR.org, 2016.

\bibitem[Hochreiter \& Schmidhuber(1997)Hochreiter and
  Schmidhuber]{10.1162/neco.1997.9.8.1735}
Sepp Hochreiter and J\"{u}rgen Schmidhuber.
\newblock Long short-term memory.
\newblock \emph{Neural Comput.}, 9\penalty0 (8):\penalty0 1735–1780, nov
  1997.
\newblock ISSN 0899-7667.
\newblock \doi{10.1162/neco.1997.9.8.1735}.
\newblock URL \url{https://doi.org/10.1162/neco.1997.9.8.1735}.

\bibitem[Hy{\"o}tyniemi(1996)]{hyotyniemi1996turing}
Heikki Hy{\"o}tyniemi.
\newblock Turing machines are recurrent neural networks.
\newblock \emph{Proceedings of step}, 96, 1996.

\bibitem[Jing et~al.(2017)Jing, Shen, Dubcek, Peurifoy, Skirlo, LeCun, Tegmark,
  and Solja\v{c}i\'{c}]{10.5555/3305381.3305560}
Li~Jing, Yichen Shen, Tena Dubcek, John Peurifoy, Scott Skirlo, Yann LeCun, Max
  Tegmark, and Marin Solja\v{c}i\'{c}.
\newblock Tunable efficient unitary neural networks (eunn) and their
  application to rnns.
\newblock In \emph{Proceedings of the 34th International Conference on Machine
  Learning - Volume 70}, ICML'17, pp.\  1733–1741. JMLR.org, 2017.

\bibitem[Kerg et~al.(2019)Kerg, Goyette, Puelma~Touzel, Gidel, Vorontsov,
  Bengio, and Lajoie]{10.5555/3454287.3455506}
Giancarlo Kerg, Kyle Goyette, Maximilian Puelma~Touzel, Gauthier Gidel, Eugene
  Vorontsov, Yoshua Bengio, and Guillaume Lajoie.
\newblock Non-normal recurrent neural network (nnrnn): learning long time
  dependencies while improving expressivity with transient dynamics.
\newblock \emph{Advances in neural information processing systems}, 32, 2019.

\bibitem[LeCun et~al.(1998)LeCun, Bottou, Bengio, and
  Haffner]{lecun1998gradient}
Yann LeCun, L{\'e}on Bottou, Yoshua Bengio, and Patrick Haffner.
\newblock Gradient-based learning applied to document recognition.
\newblock \emph{Proceedings of the IEEE}, 86\penalty0 (11):\penalty0
  2278--2324, 1998.

\bibitem[Lezcano-Casado \& Mart\'{\i}nez-Rubio(2019)Lezcano-Casado and
  Mart\'{\i}nez-Rubio]{pmlr-v97-lezcano-casado19a}
Mario Lezcano-Casado and David Mart\'{\i}nez-Rubio.
\newblock Cheap orthogonal constraints in neural networks: A simple
  parametrization of the orthogonal and unitary group.
\newblock In Kamalika Chaudhuri and Ruslan Salakhutdinov (eds.),
  \emph{Proceedings of the 36th International Conference on Machine Learning},
  volume~97 of \emph{Proceedings of Machine Learning Research}, pp.\
  3794--3803. PMLR, 09--15 Jun 2019.
\newblock URL \url{https://proceedings.mlr.press/v97/lezcano-casado19a.html}.

\bibitem[Marcus et~al.(1993)Marcus, Santorini, and
  Marcinkiewicz]{marcus-etal-1993-building}
Mitchell~P. Marcus, Beatrice Santorini, and Mary~Ann Marcinkiewicz.
\newblock Building a large annotated corpus of {E}nglish: The {P}enn
  {T}reebank.
\newblock \emph{Computational Linguistics}, 19\penalty0 (2):\penalty0 313--330,
  1993.
\newblock URL \url{https://aclanthology.org/J93-2004}.

\bibitem[Pascanu et~al.(2012)Pascanu, Mikolov, and Bengio]{Pascanu2012OnTD}
Razvan Pascanu, Tomas Mikolov, and Yoshua Bengio.
\newblock On the difficulty of training recurrent neural networks.
\newblock In \emph{International Conference on Machine Learning}, 2012.

\bibitem[Zhao et~al.(2020)Zhao, Huang, Lv, Duan, Qin, Li, and
  Tian]{pmlr-v119-zhao20c}
Jingyu Zhao, Feiqing Huang, Jia Lv, Yanjie Duan, Zhen Qin, Guodong Li, and
  Guangjian Tian.
\newblock Do {RNN} and {LSTM} have long memory?
\newblock In Hal~Daumé III and Aarti Singh (eds.), \emph{Proceedings of the
  37th International Conference on Machine Learning}, volume 119 of
  \emph{Proceedings of Machine Learning Research}, pp.\  11365--11375. PMLR,
  13--18 Jul 2020.
\newblock URL \url{https://proceedings.mlr.press/v119/zhao20c.html}.

\end{thebibliography}
\bibliographystyle{iclr2023_conference_tinypaper}
}
\appendix
\section{Appendix}
\subsection{Related work}
{\bf Long short-term memory.} A dominant approach in preventing the exploding and vanishing gradient problem in RNN is with gated mechanisms, such as LSTM \citep{10.1162/neco.1997.9.8.1735}. This model is easily trained and gives good performance on a variety of sequence learning tasks. Despite the fact that LSTM has a lower memory capacity compared to vanilla RNN \citep{collins2017capacity}, and that it performs badly on long-memory tasks \citep{pmlr-v80-helfrich18a}, we include this model for its high effectiveness in language modeling, where the decisive factor is expressivity such as mechanism or hidden size \citep{Bojanowski2015AlternativeSF}.

{\bf Orthogonal RNN.} This class of models addressed the VGP with non-saturated activation functions (e.g. modReLU) and a unitary hidden weight matrix. To constrain the unitarity during gradient descent, scoRNN followed a scaled Cayley transformation, while uRNN  and EURNN used pre-selected unitary transformation compositions \citep{pmlr-v80-helfrich18a,pmlr-v48-arjovsky16,10.5555/3305381.3305560}. To compensate for the expressivity loss, expRNN proposed to use the computationally cheap exponential map stemming from Lie group theory \citep{pmlr-v97-lezcano-casado19a}. As a result, the gradients grow only \emph{linearly} in sequence length. There is also an initialization scheme proposed for this class to directly solve many long sequence learning tasks \citep{10.5555/3045390.3045605}.

{\bf Non-normal RNN.} Despite having high trainability, being unitary limits the expressivity of Orthogonal RNNs compared to their free parameters counterparts. Non-normal RNN, which leverages Schur decomposition to acquire non-unit eigenvalues, is proposed to overcome this problem while maintaining the trainability of Orthogonal RNNs \citep{10.5555/3454287.3455506}. Consequently, nnRNN outperforms Orthogonal RNNs on both long sequence learning tasks and language modeling tasks.
\subsection{Assumptions}\label{appendix:assumption}
\begin{assumption}\label{assumption:scaleindependent}
	(scale-independent loss) $\exists \epsilon>0, \forall \delta>0: ||\vy-\hat{\vy}||_2 \geq \delta \land \Lagr(\vy,\hat{\vy})<\epsilon$.
\end{assumption}
Assumption~\ref{assumption:scaleindependent} prevents the convergence from closing the gap between $||\hat{\vy}||_2$ and $||\vy||_2$ by re-scaling the output layer with a factor of $||\mD_f||_2$, which potentially causes the VGP. As a remark, it is possible to reduce this effect on scale-dependent loss with $\Lagr(\delta^{-1}\vy,\hat{\vy})$, for some small hyperparameter $\delta > 0$. We leave this for consecutive works.
\begin{assumption}\label{assumption:boundedinput}
	(bounded input distribution) $\exists C_x>0,\forall t: \norm{\vx_t}_\infty\leq C_x$.
\end{assumption}
In practice, it is common to normalize input with $C_x = 1$ before training neural network-based architectures. Assumption~\ref{assumption:boundedinput} prevents the absurdity of an infinitely large input saturating asRNN and causing the VGP.
\begin{assumption}\label{assumption:nonilpotent} (no nilpotent singular value)
	$\sigma_{\min}(\mW_{hh}) \geq 1$.
\end{assumption}
Assumption~\ref{assumption:nonilpotent} is unnecessary in practice even when $\mW_{hh}$ is orthogonally constrained thanks to the amplification of gradients using $\mW_{f}^{-1}$ (see Appendix~\ref{common_setting}).
\subsection{Proof of Theorem~\ref{maintheoremlabel}}
\begin{lemma}\label{lin2019lem}\citet{Lin2019uRNNA}
	For a diagonal matrix $\displaystyle\mD$ and a conformable matrix $\displaystyle\mA$, we have $\displaystyle \norm{\mD\mA}_2 \leq \norm{\mD}_{\max}\norm{\mA}_2$. 
\end{lemma}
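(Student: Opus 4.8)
The plan is to reduce this operator-norm inequality to the way the diagonal matrix $\mD$ acts on a single vector, and then close with submultiplicativity. Recall the variational characterization of the spectral norm, $\norm{\mD\mA}_2 = \sup_{\norm{\vx}_2 = 1}\norm{\mD\mA\vx}_2$, and note that for conformability $\mD$ is square (say $m\times m$) while $\mA$ is $m\times n$, so $\mA\vx \in \R^m$ is exactly what $\mD$ acts on. Observe also that for a diagonal matrix with entries $D_{11},\dots,D_{mm}$ the entrywise max-norm reduces to $\norm{\mD}_{\max} = \max_i |D_{ii}|$, since the off-diagonal entries vanish. Thus the entire statement rests on controlling $\norm{\mD\vy}_2$ by $\norm{\vy}_2$ for an arbitrary vector $\vy$.

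The key step is the elementary bound on the action of $\mD$: for any $\vy \in \R^m$,
\begin{equation*}
\norm{\mD\vy}_2^2 = \sum_i D_{ii}^2\, y_i^2 \leq \Big(\max_i D_{ii}^2\Big)\sum_i y_i^2 = \norm{\mD}_{\max}^2\,\norm{\vy}_2^2,
\end{equation*}
whence $\norm{\mD\vy}_2 \leq \norm{\mD}_{\max}\norm{\vy}_2$. I would then apply this with $\vy = \mA\vx$ and chain with the definition of $\norm{\mA}_2$ to get $\norm{\mD\mA\vx}_2 \leq \norm{\mD}_{\max}\norm{\mA\vx}_2 \leq \norm{\mD}_{\max}\norm{\mA}_2\norm{\vx}_2$. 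Taking the supremum over unit vectors $\vx$ yields $\norm{\mD\mA}_2 \leq \norm{\mD}_{\max}\norm{\mA}_2$, as claimed.

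An equivalent route is to first establish the auxiliary identity $\norm{\mD}_2 = \norm{\mD}_{\max}$ for diagonal matrices — the singular values of a diagonal matrix are precisely the absolute values $|D_{ii}|$, so its largest singular value equals $\max_i|D_{ii}|$ — and then invoke submultiplicativity of the spectral norm, $\norm{\mD\mA}_2 \leq \norm{\mD}_2\norm{\mA}_2$. Both routes are short, and there is no genuine obstacle: the only point deserving care is that $\norm{\cdot}_{\max}$ must be read as the entrywise maximum so that it coincides with $\max_i|D_{ii}|$ on diagonal matrices (the claim would be false under an induced-$\infty$-norm reading), and that the summation index in the display ranges over the rows acted on by $\mD$, which is guaranteed by conformability.
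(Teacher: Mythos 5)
Your proof is correct: the vector bound $\norm{\mD\vy}_2 \leq \norm{\mD}_{\max}\norm{\vy}_2$ followed by the supremum over unit vectors is exactly the standard argument, and your alternative route via $\norm{\mD}_2 = \norm{\mD}_{\max}$ plus submultiplicativity is equally valid. Note that the paper itself offers no proof to compare against --- it imports the lemma by citation from \citet{Lin2019uRNNA} --- and your argument is the natural one-line justification of that cited fact, with the right care taken on the entrywise reading of $\norm{\cdot}_{\max}$.
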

\begin{lemma}\label{lemma1}
Let $G$ and $H$ be respectively the $d_h$-th degree generalized permutation group and its signed permutation subgroup. Under the assumptions in Appendix~\ref{appendix:assumption}, if $\displaystyle\norm{\mD_{f}}_2\leq \frac{\text{arctanh}\left( \sqrt{1-\norm{\mW_{hh}^{-1}}_2}\right) }{\left(\norm{\mW_{xh}}_2C_x + \norm{\vb}_\infty\right) \sum_{i=0}^{t-1}\left( \norm{\mW_{hh}}_{\max}+1\right)^i}$ and
$\displaystyle\min_{\mE\in G}\norm{\mW_{hh} - \mE}_2 \leq \frac{\sigma_{\min}\left( \mD_{f}\right)}{\norm{\mD_{f}}_2}$,
 then
$$\displaystyle \forall t\in\mathbb{N}^*,\exists \epsilon(t)\geq0: \min_{\mE\in H}\norm{\mU_{f} - \mE}_2 \leq \epsilon(t)  \rightarrow \sigma_{\min}(\mJ(t))\geq 1.$$
\end{lemma}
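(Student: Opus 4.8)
The statement concerns a single-step Jacobian, so the plan is to prove $\sigma_{\min}(\mJ(t))\geq 1$ directly, in three stages: an activation estimate, the exact signed-permutation case, and a perturbation argument. Throughout I write $\mS := \text{diag}[1-(\mW_f\vh_t)^2]$, so that $\mJ(t)=\mW_f^{-1}\mS\mW_f\mW_{hh}$ with $\mW_f=\mU_f\mD_f$. Since $\mW_f\vh_t=\tanh(\mW_f(\mW_{xh}\vx_t+\mW_{hh}\vh_{t-1}+\vb))$ has entries in $(-1,1)$, the matrix $\mS$ is positive diagonal with entries in $(0,1]$. I would first note that Assumption~\ref{assumption:nonilpotent} gives $\norm{\mW_{hh}^{-1}}_2=1/\sigma_{\min}(\mW_{hh})\leq 1$, which is exactly what makes $\sqrt{1-\norm{\mW_{hh}^{-1}}_2}$ real and the $\text{arctanh}$ in the hypothesis well defined.

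First I would establish the activation estimate $\sigma_{\min}(\mS)\geq\norm{\mW_{hh}^{-1}}_2$, i.e.\ every entry $1-(\mW_f\vh_t)_i^2\geq\norm{\mW_{hh}^{-1}}_2$. By monotonicity of $\tanh$ this is equivalent to bounding the pre-activation $\mW_f(\mW_{xh}\vx_t+\mW_{hh}\vh_{t-1}+\vb)$ entrywise by $\text{arctanh}(\sqrt{1-\norm{\mW_{hh}^{-1}}_2})$. I would obtain this by induction on $t$: using $\norm{\mW_f\vh_s}_\infty\leq 1$ at every earlier step together with Assumption~\ref{assumption:boundedinput} ($\norm{\vx_s}_\infty\leq C_x$), the hidden-state magnitude obeys a geometric recursion whose partial sums are $\sum_{i=0}^{t-1}(\norm{\mW_{hh}}_{\max}+1)^i$; multiplying through by $\norm{\mW_f}_2=\norm{\mD_f}_2$ and invoking the hypothesis on $\norm{\mD_f}_2$ caps the pre-activation at precisely the required threshold. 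Lemma~\ref{lin2019lem} is the natural tool for the diagonal/matrix-norm manipulations in this estimate.

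Next, in the exact case $\mU_f=\mE_H\in H$, I would use that conjugating a diagonal matrix by a signed permutation keeps it diagonal: $\mW_f^{-1}\mS\mW_f=\mD_f^{-1}\mE_H^{\!\top}\mS\mE_H\mD_f=\mE_H^{\!\top}\mS\mE_H$, because $\mE_H^{\!\top}\mS\mE_H$ merely permutes the entries of $\mS$ (the $\pm 1$ signs cancel, being squared) and diagonal matrices commute with $\mD_f$. Hence $\mJ(t)=(\mE_H^{\!\top}\mS\mE_H)\mW_{hh}$, and by $\sigma_{\min}(\mA\mB)\geq\sigma_{\min}(\mA)\sigma_{\min}(\mB)$ together with $\sigma_{\min}(\mW_{hh})=1/\norm{\mW_{hh}^{-1}}_2$ and the activation estimate, $\sigma_{\min}(\mJ(t))\geq\norm{\mW_{hh}^{-1}}_2\cdot\sigma_{\min}(\mW_{hh})=1$.

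Finally, to pass from $\mU_f=\mE_H$ to a neighborhood I would use continuity of $\mU_f\mapsto\mJ(t)$ and of $\sigma_{\min}(\cdot)$. Writing $\mW_f^{-1}\mS\mW_f=\mD_f^{-1}(\mU_f^{\!\top}\mS\mU_f)\mD_f$ and splitting $\mU_f^{\!\top}\mS\mU_f$ into diagonal and off-diagonal parts, the off-diagonal part vanishes as $\mU_f\to\mE_H$ but is amplified by the conditioning $\norm{\mD_f}_2/\sigma_{\min}(\mD_f)$ under the conjugation by $\mD_f$. This is exactly where the second hypothesis enters: requiring $\min_{\mE\in G}\norm{\mW_{hh}-\mE}_2\leq\sigma_{\min}(\mD_f)/\norm{\mD_f}_2$ forces $\mW_{hh}$ to lie within this inverse condition number of a generalized permutation, so the amplified off-diagonal perturbation is absorbed and $\sigma_{\min}(\mJ(t))$ stays $\geq 1$ for all $\mU_f$ with $\min_{\mE\in H}\norm{\mU_f-\mE}_2\leq\epsilon(t)$, with $\epsilon(t)\geq 0$ (strictly positive when the activation bound or Assumption~\ref{assumption:nonilpotent} hold with slack, and possibly $0$ at the boundary). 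I expect this last step --- quantifying $\epsilon(t)$ by balancing the $\norm{\mD_f}_2/\sigma_{\min}(\mD_f)$ amplification against the distance of $\mW_{hh}$ to the monomial group --- to be the main obstacle; the exact case and the activation estimate are comparatively routine.
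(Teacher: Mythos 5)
Your overall strategy coincides with the paper's: (i) bound the saturation of the activations so that every entry of $\mD_t=\text{diag}[1-(\mW_f\vh_t)^2]$ is at least $1/\sigma_{\min}(\mW_{hh})$; (ii) note that for $\mU_f$ exactly a signed permutation the conjugated gain matrix $\mW_f^{-1}\mD_t\mW_f$ is diagonal and commutes with $\mD_f$, giving $\sigma_{\min}(\mJ(t))\geq\sigma_{\min}(\mD_t)\,\sigma_{\min}(\mW_{hh})\geq 1$; and (iii) extend to a neighborhood of $H$ by a perturbation argument with $\epsilon(t)$ possibly equal to $0$. Steps (ii) and (iii) match the paper's second chain of inequalities essentially line for line, including your caveat that $\epsilon(t)$ may vanish at the boundary.

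The one genuine gap is where you place the hypothesis $\min_{\mE\in G}\norm{\mW_{hh}-\mE}_2\leq \sigma_{\min}(\mD_f)/\norm{\mD_f}_2$. You spend it in step (iii), but there it buys nothing: the off-diagonal perturbation in that step is produced by $\mR_f=\mU_f-\mE_f$, whose size you already control through the free choice of $\epsilon(t)$, and the distance of $\mW_{hh}$ to $G$ never enters that estimate. Where the hypothesis is actually needed is in your activation estimate. Unrolling the recursion for $\norm{\mW_f\vh_t}_\infty$ produces the recurrent factor $\norm{\mD_f\mW_{hh}\mD_f^{-1}}_2$, and conjugation by $\mD_f$ can inflate a generic $\mW_{hh}$ by the condition number $\norm{\mD_f}_2/\sigma_{\min}(\mD_f)$; your sketch simply asserts the geometric ratio $\norm{\mW_{hh}}_{\max}+1$ without justifying it. The paper obtains it by splitting $\mW_{hh}=\mE_{hh}+\mR_{hh}$ with $\mE_{hh}$ the nearest generalized permutation (diagonal conjugation acts benignly on monomial matrices, contributing the $\norm{\mW_{hh}}_{\max}$ term) and invoking precisely the second hypothesis to get $\norm{\mD_f\mR_{hh}\mD_f^{-1}}_2\leq \frac{\norm{\mD_f}_2}{\sigma_{\min}(\mD_f)}\norm{\mR_{hh}}_2\leq 1$, which supplies the $+1$. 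Without this ingredient the denominator in the assumed bound on $\norm{\mD_f}_2$ does not match the quantity your recursion actually produces, and step (i) does not close as written.
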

\begin{proof}
Let $\displaystyle\mE_{hh} = \argmin_{\mE\in G}\norm{\mW_{hh} - \mE}_2$ and $\displaystyle\mR_{hh} = \mW_{hh} - \mE_{hh}$. First we show that $\displaystyle\norm{\mW_f\vh_t}_\infty$ is bounded:
\begin{align}\displaystyle
	\norm{\mW_f\vh_t}_\infty
	&=\norm{\text{tanh}\left(\mW_f\left( \mW_{xh}\vx_{t}+\mW_{hh}\vh_{t-1}+ \vb\right)\right)}_\infty\\
	&= \text{tanh}\left(\norm{\mW_f\left( \mW_{xh}\vx_{t}+ \mW_{hh}\vh_{t-1} + \vb\right)}_\infty\right)\\
	&\leq \text{tanh}\left( \norm{\mW_f\mW_{xh}\vx_{t}}_\infty + \norm{\mW_f\mW_{hh}\vh_{t-1}}_\infty + \norm{\mW_f\vb}_\infty\right)\\
	&\leq \text{tanh}\left( \norm{\mD_f}_2\left( \norm{\mW_{xh}}_2C_x + \norm{\vb}_\infty\right) +\norm{\mD_f\mW_{hh}\mD_f^{-1}}_2\norm{\mW_f\vh_{t-1}}_\infty\right)\\
	&\leq \text{tanh}\left( \norm{\mD_f}_2\left( \norm{\mW_{xh}}_2C_x + \norm{\vb}_\infty\right) \sum_{i=0}^{t-1}\norm{\mD_f(\mE_{hh}+\mR_{hh})\mD_f^{-1}}_2^i\right)\\
	&\leq \text{tanh}\left( \norm{\mD_f}_2\left( \norm{\mW_{xh}}_2C_x + \norm{\vb}_\infty\right) \sum_{i=0}^{t-1}\left( \norm{\mW_{hh}}_{\max}+\frac{\norm{\mR_{hh}}_2\norm{\mD_{f}}_2}{\sigma_{\min}\left(\mD_{f}\right)}\right)^i\right)\\
	&\leq \text{tanh}\left( \norm{\mD_f}_2\left( \norm{\mW_{xh}}_2C_x + \norm{\vb}_\infty\right) \sum_{i=0}^{t-1}\left( \norm{\mW_{hh}}_{\max}+1\right)^i\right) \\
	&\leq 1-\frac{1}{\sigma_{\min}\left( \mW_{hh}\right)}
\end{align}
Let $\displaystyle\mE_f = \argmin_{\mE\in H}\norm{\mU_f - \mE}_2, \mR_{f} = \mU_f - \mE_f, \mD_t = \text{diag}[1-\left( \mW_f\vh_t\right) ^2]$. Then
\begin{align}
	\sigma_{min}\left( \mJ\left( t\right) \right) &=\sigma_{\min}\left( \mD_f^{-1}\mU_f^\intercal\text{diag}\mD_t\mU_f\mD_f\mW_{hh}\right) \\
	&\geq\sigma_{\min}\left( \mD_f^{-1}\left( \mR_f^\intercal+\mE_f^\intercal\right) \mD_t\left( \mR_f+\mE_f\right) \mD_f\right) \sigma_{\min}\left( \mW_{hh}\right) \\
	\begin{split}
		&\geq\biggl(\sigma_{\min}\Bigl( \mD_f^{-1}\mE_f^\intercal\mD_t\mE_f\mD_f\biggr) -\norm{\mD_f^{-1}\Bigl( \mE_f^\intercal\mD_t\mR_f+\mR_f^\intercal\mD_t\mE_f+\\
		&\mR_f^\intercal\mD_t\mR_f\Bigr) \mD_f}_2\biggr) \sigma_{\min}\left( \mW_{hh}\right)
	\end{split} \\
	&\geq \left( 1-\max_i\left( \mW_f\vh_t\right) ^2_i - \frac{\norm{\mD_f}_2^2}{\sigma_{\min}\left( \mD_f\right) }\left( 2\norm{\mR_f}_2 + \norm{\mR_f}_2^2\right) \right) \sigma_{\min}\left( \mW_{hh}\right) \\
	&\geq \left( \frac{1}{\sigma_{\min}\left( \mW_{hh}\right) } - \frac{\norm{\mD_f}_2^2}{\sigma_{\min}\left( \mD_f\right) }\norm{\mR_f}_2\right) \sigma_{\min}\left( \mW_{hh}\right) 
\end{align}
The rest of the proof follows directly from the previous inequality.
\end{proof}
\maintheoremrestate*
\begin{proof}
From Lemma~\ref{lin2019lem} and Lemma~\ref{lemma1}, if $\epsilon = \min_{t_1 \leq t \leq t_2} \epsilon(t)$, then $\norm{\mU_{f} - \mE}_2 \leq \epsilon  \rightarrow \sigma_{\min}(\mJ(t_2,t_1))\geq 1$
\end{proof}
\subsection{Experimental supplementary}\label{experiment}
\begin{table}[!ht]\label{tab}
	\begin{minipage}{6.3cm}
    \centering
   \includegraphics[width=6.3cm]{./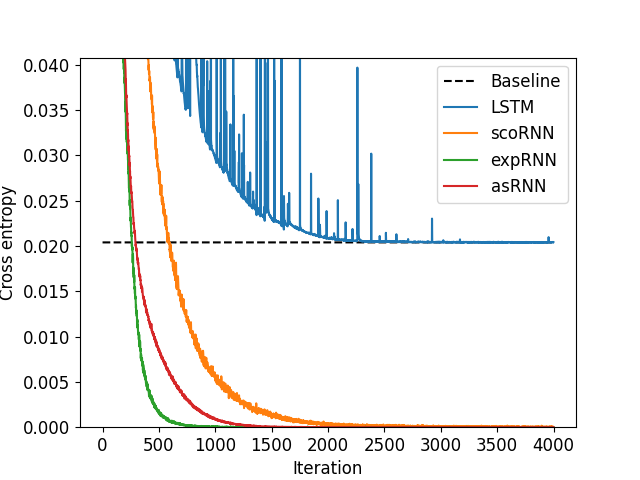}
		\captionof{figure}{Copy task ($L=1000$).} \label{copy1000fig}
	\end{minipage}
	\begin{minipage}{7cm}
    \centering
	\setlength\tabcolsep{3pt}
		\begin{tabular}{|l|l|l|}
			\hline
			Model & T=150 & T=300  \\ \hline
			LSTM & $\bm{69.81 \pm 0.001}$ & $\bm{69.60 \pm 0.0003}$  \\ \hline
			asRNN & $68.93 \pm 0.001$ & $68.59 \pm 0.001$  \\ \hline
			nnRNN & $68.78 \pm 0.001$ & $68.52 \pm 0.0004$  \\ \hline
			expRNN & $68.07 \pm 0.15$ & $67.58 \pm 0.04$  \\ \hline
			RNN-orth & $66.29 \pm 0.07$ & $65.53 \pm 0.09$  \\ \hline
			EURNN & $65.68 \pm 0.002$ & $65.55 \pm 0.002$  \\ \hline
			RNN & $40.01 \pm 0.026$ & $39.97 \pm 0.025$ \\ \hline
		\end{tabular}
		\caption{Test accuracy on PTB-c at different BPTT lengths (T). Best results are in bold.}
	\end{minipage}
        \\[\baselineskip]
        \begin{minipage}[b]{0.45\linewidth}
        \centering
        \setlength\tabcolsep{3pt}
        \begin{tabular}{|l|l|l|} \hline
        Model & $L=1000$ & $L=2000$ \\ \hline
        LSTM & $0.40\pm 0.02$ & $0.78\pm 0.03$ \\ \hline
        scoRNN & $0.70\pm 0.02$ & $1.35\pm 0.04$ \\ \hline
        expRNN & $0.62\pm 0.03$ & $1.20\pm 0.04$ \\ \hline
        asRNN & $0.70\pm 0.02$ & $1.33\pm 0.04$ \\ \hline
        \end{tabular}
        \caption{Iteration train time in seconds at Copy Memory}
        \label{tab:train_time}
        \end{minipage}
        \begin{minipage}[b]{0.45\linewidth}
        \begin{tabular}{|l|l|l|}
        \hline
            \#params & sMNIST & pMNIST \\ \hline
            16K & 98.89 & 95.41 \\ \hline
            69K & 99.21 & 96.88 \\ \hline
            137K & 99.30 & 96.96 \\ \hline
        \end{tabular}
        \caption{Iteration train time in seconds at Copy Memory}
        \label{tab:param_anal}
        \end{minipage}
\end{table}
\subsubsection{Common setting}\label{common_setting}
The batch size across all experiments was $128$. asRNN is optimized using  smoothing constant $\alpha=0.99$ on MNISTs tasks, and $\alpha=0.9$ otherwise. We report the best results on a similar setup to ours in \citet{10.5555/3305381.3305560}, \citet{pmlr-v48-arjovsky16}, \citet{pmlr-v80-helfrich18a}, \citet{pmlr-v97-lezcano-casado19a}, and \citet{10.5555/3454287.3455506}. For other experiments, we replicate the same setup of \citet{pmlr-v97-lezcano-casado19a} (copy memory and pixelated MNIST), and of \citet{10.5555/3454287.3455506} (Penn Treebank character-level prediction).

Since no advantage was observed when $\mW_{hh}$ is updated freely according to Assumption~\ref{assumption:nonilpotent} during preliminary empirical testing, this layer is kept strictly orthogonal for the maximum memory-per-parameter capacity. As a result, both $\mU_f$ and $\mW_{hh}$ are parameterized as described in \cite{pmlr-v97-lezcano-casado19a}. The diagonal matrix is parameterized as $\mD_f = \text{diag}(\vd_f)$, where ${\vd_f}_i=|s_i| + \epsilon, \vs \in \mathbb{R}^{d_h}, \epsilon>0$.

According to Theorem~\ref{maintheoremlabel}, we initialize $\mU_f,\mW_{hh}$ close to a permutation matrix, such as the identity or those in \citet{10.5555/3045390.3045605} and \citet{pmlr-v80-helfrich18a}. When $a,b$, and $\epsilon$ are close to zero, asRNN resembles a linear RNN. For this reason, we find the hyperparameter setting of expRNN to be compatible with asRNN. Thus, $\ln(\mW_{hh})$ uses Henaff initialization \citep{10.5555/3045390.3045605} for copy memory tasks and Cayley initialization \citep{pmlr-v80-helfrich18a} for other tasks. We also initialize $\mW_{xh}$ as random semi-orthogonal and $\vb$ as zero for a greater bound of $\mD_f$. Ultimately, we initialize $\vs \sim \mathcal{U}(x;a,b)$, where the optimal settings of $a,b,\epsilon$ are task-dependent.

Not only we can avoid the VGP by setting up $a,b,\epsilon$ close to zero, but we can also avoid the exploding gradient problem (EGP) by setting $a,b,\epsilon$ far from zero (See PTB-c in Table~\ref{hyperparam}). As mitigating the EGP is not our main focus, we use gradient clipping with norm $10$ on pixelated MNIST and copy memory tasks, then we select the best hyperparameters for asRNN.
\subsubsection{Copy memory task}
Copy memory is a synthetic many-to-many classification task first introduced in \citet{10.1162/neco.1997.9.8.1735} to test the ability to recall information bits after a very long delay.  This task is designed to be extremely difficult for the vanilla RNN due to the VGP \citep{10.5555/3045390.3045605}.

Input samples contain the alphabets $2$-$9$, and the blank and start letters $0$, $1$, are one-hot encoded into vectors of dim $10$. The recalling sequence $\mathcal{A}$ contains the first $K$ letters of the input and are uniformly sampled from the alphabet. It is followed by $L$ blank letters, $1$ start letter, and another $K-1$ blank letters. An output sample contains $K+L+1$ blanks letter and the recalling sequence $\mathcal{A}$. The baseline model output $K+L+1$ blanks letter and another $K-1$ letters sampled uniformly of the alphabet, which has the loss of $\frac{K\ln{8}}{L+2K}$.

On this task, we benchmark asRNN against the architectures: LSTM, scoRNN, and expRNN. The recalling length $K$ is $10$ for both of the experiments in Figure~\ref{copy1000fig} and Figure~\ref{copy2000fig}). The seed on this task is set at $5544$ and the number of parameters is fixed at $22$K. All experiments are optimized with smoothing constant $\alpha=0.9$.

 We include in Table~\ref{tab:train_time} the average iteration training time in seconds of Copy Memory experiments. This result shows asRNN achieved similar computational complexity when compared to scoRNN, and was marginally worse than expRNN. It is also worth noting that LSTM achieved low running time due to its native implementation in Pytorch, while other methods were implemented from scratch. Overall, asRNN achieved similar training time compared to other orthogonal RNNs, while offering promising performance improvements, especially when solving problems requires long-term memories.
\subsubsection{Pixelated MNIST}
Pixelated MNIST is a classification task for hand-written numbers \citep{lecun1998gradient}. It benchmarks RNN sequence learning under a continuous stream of input.

On the sequential MNIST task (sMNIST, unpermuted pixelated MNIST), each $\displaystyle 28\times 28$ image of a decimal digit is pixelated into a $\displaystyle 784$ input sequence, then is fed into the RNN to produce an embed vector used for classification. On the permuted MNIST task (pMNIST, permuted pixelated MNIST), a permutation is applied to all samples before they are pixelated.

On this task, we benchmark asRNN against the following architecture: LSTM, scoRNN, expRNN, restricted-capacity unitary RNN (uRNN)\citep{pmlr-v48-arjovsky16}. The seed on this task is set at $5544$.

Total memory capacity is one among the properties that is limited by hidden sizes. For example, orthogonal RNNs might possess infinitely long memory despite having small hidden size, but are limited in the total memory capacity.

We excerpt Fig.~\ref{fig} for a memory capacity sensitivity analysis over the number of parameters for  benchmarks. The summarized results for asRNN in Table~\ref{tab:param_anal} shows that under the same sequence length, as the total number of parameters increases, so is the total memory capacity.
\subsubsection{Penn Treebank character-level prediction (PTB-c) task}
Penn Treebank is a corpus that has an alphabet size of $50$, including the end of sentence symbol, and is first introduced in \citep{marcus-etal-1993-building}. We choose this many-to-many character-level modeling task because rather than bottlenecking RNNs' memory capacity, it tests their expressiveness, which includes structural design and the hidden size \citet{Bojanowski2015AlternativeSF}. Gated mechanisms such as LSTM is known to be effective for PTB-c.

On this task, we benchmark asRNN against the following architecture: LSTM, expRNN, nnRNN \citep{10.5555/3454287.3455506}, EURNN \citep{10.5555/3305381.3305560}, modReLU RNN with Glorot initialization (RNN), modReLU RNN with orthogonal initialization (RNN-orth). For the Penn Treebank character-level prediction, each experiment is carried out for $5$ repeated trials. On this task, the number of parameters for each model is fixed at $1.32$M.
\section{Future work}\label{sec:conclusion}
In the future, we plan to extend asRNN to scale-dependent loss (Assumption\ref{assumption:scaleindependent}). We also aim to reduce the computational cost using non-exponential sigmoids such as $\frac{x}{(1+|x|^k)^{k^{-1}}}$. 

As our work focuses on addressing the long memory and memory capacity problems of vanilla RNNs, we were unable to directly compare with relatively recent architecture such as UniCORNN for one key reason: they have multiple layers and more complex architectures such as stacked layers, gates, etc. In contrast, all methods we considered are related to the vanilla RNN with a single layer to demonstrate the main purpose of this study. In future works, we will consider improving asRNN for a fair comparison with more recent architectures.
\FloatBarrier
\begin{table}[!ht]
	\centering
	\begin{tabular}{|l|l|l|l|l|l|l|l|}
		\hline
		Task & \#epoch/iter & $d_h$ & lr & lr $\mW_{hh}$ & $a$ & $b$ & $\epsilon$\\ \hline
		Copy memory & $4000$ & 138 & $2\cdot 10^{-4}$ & $10^{-4}$ & 0 & 0 & $2\cdot 10^{-5}$ \\ \hline
		sMNIST & $70$ & 122 & $7\cdot10^{-4}$ & $7\cdot10^{-5}$ & $2\cdot10^{-2}$ & $2\cdot10^{-2}$ & $10^{-2}$ \\ \hline
		sMNIST & $70$ & 257 & $5\cdot10^{-4}$ & $5\cdot10^{-5}$ & $2\cdot10^{-2}$ & $2\cdot10^{-2}$ & $10^{-2}$\\ \hline
		sMNIST & $70$ & 364 & $3\cdot10^{-4}$ & $3\cdot10^{-5}$ & $2\cdot10^{-2}$ & $2\cdot10^{-2}$ & $10^{-2}$\\ \hline
		pMNIST & $70$ & 122 & $10^{-3}$ & $10^{-4}$ & $2\cdot10^{-2}$ & $2\cdot10^{-2}$ & $10^{-2}$\\ \hline
		pMNIST & $70$ & 257 & $7\cdot10^{-4}$ & $7\cdot10^{-5}$ & $2\cdot10^{-2}$ & $2\cdot10^{-2}$ & $10^{-2}$\\ \hline
		pMNIST & $70$ & 364 & $7\cdot10^{-4}$ & $7\cdot10^{-5}$ & $2\cdot10^{-2}$ & $2\cdot10^{-2}$ & $10^{-2}$\\ \hline
		PTB-c (T=150)& $100$ & 1024 & $10^{-3}$ & $10^{-3}$ & $8\cdot10^{-1}$ & 3 & 0 \\ \hline
		PTB-c (T=300)& $100$ & 1024 & $10^{-3}$ & $10^{-3}$ & $8\cdot10^{-2}$ & 3 & 0 \\ \hline
	\end{tabular}
	\caption{Hyperparameters for asRNN.\label{hyperparam}}
\end{table}
\end{document}